\documentclass[sigconf,nonacm]{acmart}
\usepackage{algpseudocode}
\usepackage{xcolor}
\usepackage{subcaption}
\usepackage{amsthm}
\newtheorem{Lemma}{Lemma}
\usepackage{algorithm}
\usepackage{algpseudocode}
\usepackage{titlesec}
\titlespacing{\subsubsection}{0pt}{0.7\baselineskip}{0.3\baselineskip}

\acmConference[GLSVLSI'25]{Great Lakes Symposium on VLSI (GLSVLSI) 2025}{June 30-- July 02,
  2025}{New Orleans, LA}




\begin{document}

\title{DPQ-HD: Post-Training Compression for Ultra-Low Power Hyperdimensional Computing}

\author{Nilesh Prasad Pandey}
\email{nppandey@ucsd.edu}
\affiliation{%
  \institution{University of California San Diego}
  \city{San Diego}
  \state{California}
  \country{USA}
}

\author{Shriniwas Kulkarni}
\email{s7kulkarni@ucsd.edu}
\affiliation{%
  \institution{University of California San Diego}
  \city{San Diego}
  \state{California}
  \country{USA}
}

\author{David Wang}
\email{dyw001@ucsd.edu}
\affiliation{%
  \institution{University of California San Diego}
  \city{San Diego}
  \state{California}
  \country{USA}
}

\author{Onat Gungor}
\email{ogungor@ucsd.edu}
\affiliation{%
  \institution{University of California San Diego}
  \city{San Diego}
  \state{California}
  \country{USA}
}

\author{Flavio Ponzina}
\email{fponzina@ucsd.edu}
\affiliation{%
  \institution{University of California San Diego}
  \city{San Diego}
  \state{California}
  \country{USA}
}
\author{Tajana Rosing}
\email{tajana@ucsd.edu}
\affiliation{%
  \institution{University of California San Diego}
  \city{San Diego}
  \state{California}
  \country{USA}
}


\begin{abstract}
Hyperdimensional Computing (HDC) is emerging as a promising approach for edge AI, offering a balance between accuracy and efficiency. However, current HDC-based applications often rely on high-precision models and/or encoding matrices to achieve competitive performance, which imposes significant computational and memory demands, especially for ultra-low power devices. While recent efforts use techniques like precision reduction and pruning to increase the efficiency, most require retraining to maintain performance, making them expensive and impractical. To address this issue, we propose a novel Post Training Compression algorithm, Decomposition-Pruning-Quantization (DPQ-HD), which aims at compressing the end-to-end HDC system, achieving near floating point performance without the need of retraining. DPQ-HD reduces computational and memory overhead by uniquely combining the above three compression techniques and efficiently adapts to hardware constraints. Additionally, we introduce an energy-efficient inference approach that progressively evaluates similarity scores such as cosine similarity and performs early exit to reduce the computation, accelerating prediction inference while maintaining accuracy. We demonstrate that DPQ-HD achieves up to 20-100$\times$ reduction in memory for image and graph classification tasks with only a 1-2\% drop in accuracy compared to uncompressed workloads. Lastly, we show that DPQ-HD outperforms the existing post-training compression methods and performs better or at par with retraining-based state-of-the-art techniques, requiring significantly less overall optimization time (up to 100$\times$) and faster inference (up to 56$\times$) on a microcontroller.
\end{abstract}



\keywords{Hyperdimensional Computing, Post Training Compression, Brain-Inspired Computing}



\maketitle
\section{Introduction}\label{sec:intro}
The integration of artificial intelligence into edge devices is rapidly gaining traction, driven by the increasing demand for real-time, efficient, and privacy-preserving solutions across diverse scientific and industrial applications~\cite{singh2023edge}. However, deploying AI models on resource-constrained systems poses significant challenges, particularly in balancing computational efficiency, memory usage, energy consumption, and model accuracy. Achieving this balance is critical for ensuring low-latency and high-performance AI at the edge.
Hyperdimensional Computing (HDC), a brain-inspired machine learning paradigm, has emerged as a promising approach for edge AI due to its lightweight and highly parallelizable nature~\cite{kanerva2009hyperdimensional}. HDC maps input data into a high-dimensional space, where classification and regression tasks rely on simple element-wise operations, eliminating the need for backpropagation-based training. This inherent efficiency and scalability make HDC an attractive candidate for energy-efficient edge AI~\cite{chang2023recent,amrouch2022brain,behnam2021tiny,khaleghi2022generic}. 

Despite these advantages, optimizing HDC workloads further remains crucial for enabling edge AI on ultra-low-power embedded systems~\cite{imani2019quanthd,ponzina2024microhd,xu2023fsl}. Techniques such as quantization and dimensionality reduction have been widely explored to enhance computational and memory efficiency by reducing data precision and hyperspace size. However, these methods often require re-training to recover accuracy, imposing additional constraints on training data availability and quality. This limitation renders such methods impractical for real-world scenarios where labeled data is scarce or absent, such as in edge IoT systems trained on continuous streaming data. Furthermore, existing approaches frequently focus on isolated optimizations, such as quantization or pruning, thus limiting their potential for achieving aggressive compression which would be pivotal for HDC deployment on ultra-low power devices. 

To address these challenges, we propose DPQ-HD, a novel post-training compression framework designed to optimize HDC workloads for ultra-low power edge AI. By systematically leveraging low-rank matrix Decomposition (D), Pruning (P), and Quantization (Q), DPQ-HD compresses both the encoding process and the HDC model without requiring retraining. Our key contributions are:.
\begin{itemize}
\item We introduce DPQ-HD, a comprehensive post-training compression framework tailored for ultra-low-power edge AI applications. DPQ-HD optimizes the encoding process and the HDC model to achieve end-to-end efficiency.
\item DPQ-HD applies decomposition, pruning, and quantization to systematically compress the HDC pipeline, significantly reducing memory and computational overhead while maintaining near-floating-point accuracy. Extensive experiments on diverse datasets demonstrate that DPQ-HD achieves up to 20-100× total memory reduction when compared to uncompressed HDC workloads with only a 1-2\% accuracy drop across various image and graph-based applications.
\item To enhance efficiency beyond compression, we introduce a progressive inference approach that dynamically adjusts the processed dimensions, reducing runtime and energy usage. By using early exit, our strategy improves prediction runtime by up to 76.94\% without sacrificing accuracy, complementing model compression with runtime optimization.
\item We show that DPQ-HD outperforms existing post-training compression baselines and performs better or at par with state-of-the-art re-training methods, with significantly reduced optimization time (up to 100×) and substantial inference speedups and lower power consumption (up to 56×) on ultra-low power microcontrollers.
\end{itemize}

\section{Background and Related Work}\label{sec:background}

\subsection{Hyper-Dimensional Computing (HDC)}
HDC is an efficient and brain-inspired computing paradigm that leverages the distributed and holistic properties of high-dimensional spaces to create robust representations and enable highly parallel inference and training operations~\cite{kanerva2009hyperdimensional}. In both the training and inference stages, input data $\mathbf{x} \in \mathcal{X}$ is mapped to a high-dimensional space using an embedding function $\phi(\mathbf{x})$, which transforms inputs from $\mathbb{R}^{n}$ to $\mathbb{R}^{D}$, where $D \gg n$. Among the various encoding schemes, random projection encoding has gained significant attention due to its simplicity, accuracy, and reliance on randomly generated projection matrices~\cite{thomas2021theoretical,chang2023recent,ponzina2024microhd,gungor2024hd}. The training stage involves accumulating encoded hypervectors through aggregation or weighted schemes~\cite{hernandez2021onlinehd}. These class hypervectors represent learned patterns within the data. During inference, test data undergoes the same encoding transformation, and its encoded representation is compared to the stored class hypervectors using a similarity metric to make predictions. The simplicity and inherent parallelism of HDC's operations make it an ideal candidate for edge AI applications requiring low latency and energy efficiency~\cite{chang2023recent,yu2023fully,amrouch2022brain}. 

\subsection{Memory and Compute Demands of HDC}
The memory and computational complexity of HDC models are primarily influenced by three components: the projection matrix ($\mathbf{P}$), the encoded hypervector ($\mathbf{Q}$), and the HDC model ($\mathbf{W}$), which consists of a set of class hypervectors. The encoding stage, often the most resource-intensive, can account for more than than 80\% of an HDC model's memory and runtime requirements. Specifically, the projection matrix $\mathbf{P} \in \mathbb{R}^{F \times D}$ scales with the number of input features ($F$) and the dimensionality ($D$). The HDC model $\mathbf{W} \in \mathbb{R}^{C \times D}$ depends on $D$ and the number of classes ($C$) and in typical HDC implementations, high precision is maintained for both the projection matrix and the model, with $D$ often set to 10k~\cite{ponzina2024microhd,chang2023recent}, ultimately resulting in significant memory and computational demands. Although existing optimization techniques often binarize or quantize encoded hypervectors and models, they frequently overlook the projection matrix, leaving room for improvement. This work addresses these limitations by proposing a holistic compression approach targeting the entire HDC pipeline, optimizing both memory and computational efficiency for ultra-low-power edge AI applications.

\subsection{Related Work}
In recent years, multiple efforts have enhanced the performance of HDC for various applications~\cite{nunes2022graphhd,dutta2022hdnn,amrouch2022brain}. Many methods optimize efficiency through hypervector binarization or lower-precision HDC model weights. However, most prior works focus on quantizing the model and encoded hypervectors, leaving the encoding stage in high precision~\cite{nunes2022graphhd,ponzina2024microhd,thomas2021theoretical}. However, as discussed earlier, key components like the projection matrix and HDC weights consume significant on-device memory, and compressing these offers substantial potential to enhance HDC efficiency.

Among existing methods, QuantHD~\cite{imani2019quanthd} proposes a quantization framework to reduce the precision of input data and the HDC model, achieving computational and memory savings. However, its binary or ternary quantization offers limited benefit on microcontrollers (MCUs) due to their 8-bit computation constraints. Additionally, QuantHD requires retraining to recover performance after compression, making it expensive and impractical for real-world applications. MicroHD~\cite{ponzina2024microhd} improves HDC efficiency for edge devices by adopting an accuracy-driven approach to achieve highly compressed models with less than 1\% accuracy loss. However, it requires iterative optimization, retraining the model from scratch at each step, resulting in significant retraining overhead. Similarly, FSL-HD~\cite{xu2023fsl}, referred to as DeMAT in this work, uses Kronecker product-based decomposition to reduce encoding overhead. Unlike our framework, DeMAT focuses only on encoding and relies on specialized hardware for efficient implementation, limiting its applicability to ultra-low power devices lacking such hardware support. In the post training regime, a recent work, Eff-SparseHD~\cite{buelagala2023energy}, applies redundancy pruning on the HDC model without retraining. However, Eff-SparseHD only prunes HDC model dimensions, overlooking opportunities to optimize the full workload.

In this work, we propose DPQ-HD, a post-training compression framework that efficiently compresses both the encoding process and the HDC model, unlike prior works that typically target only one component to achieve end-to-end efficiency. By compressing both, DPQ-HD significantly enhances the overall efficiency of HDC workloads, achieving up to 20–100× memory reduction with just a 1–2\% accuracy drop on various tasks. It outperforms existing post-training pruning baselines and delivers performance comparable to retraining-based SOTA techniques, all with reduced optimization time and faster microcontroller inference.

\section{DPQ-HD Framework}\label{sec:proposal}
\subsection{DPQ-HD: A Post Training Compression Framework}
Figure~\ref{fig:DPQHD-workflow} shows our DPQ-HD framework which makes synergic use of matrix decomposition, pruning, and quantization to significantly reduce the compute and memory overhead of HDC pipelines and includes a novel online inference optimization strategy to further improve inference efficiency. 
First, DPQ-HD replaces the projection matrix with lower-rank components, a step motivated by the inherent smoothness and randomness of full‑precision random projection matrices. Next, it prunes both the projection matrix and HDC hypervectors to further reduce complexity. Finally, it uses quantization to further compress the HDC model and increase efficiency.
The specific order of decomposition, pruning, and quantization was chosen since decomposition preserves essential structural information, thereby ensuring that pruning and quantization are performed on a more compact and well-conditioned representation afterwards. In Section~\ref{justification-sq} we provide a theoretical justification for our chosen compression order after decomposition, demonstrating that applying pruning prior to quantization does not introduce additional error beyond the sum of their individual contributions.

\begin{figure}[t]
   \includegraphics[width=\linewidth]{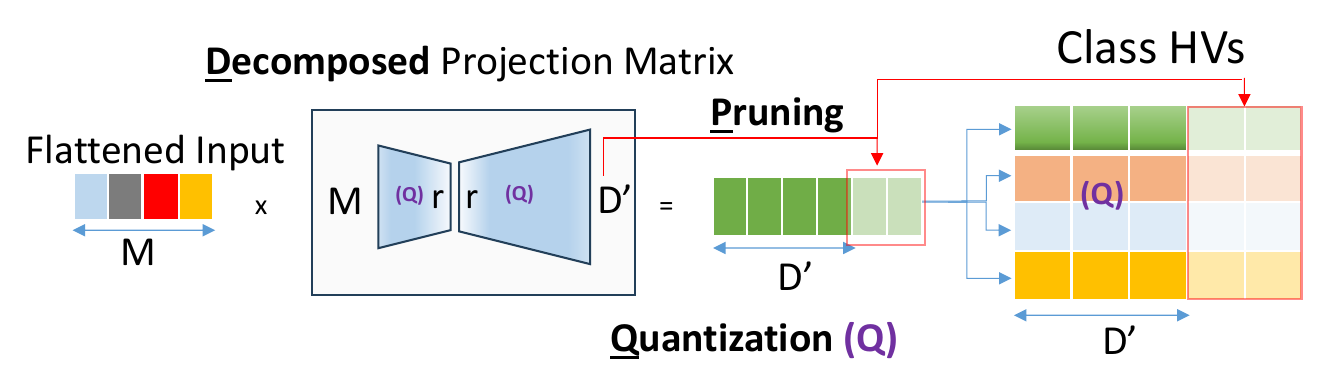}
   \caption{Illustration of DPQ-HD highlighting decomposition, pruning, and quantization. It compresses both the projection matrix and HDC weights for end-to-end efficiency, unlike methods targeting isolated components.}
   \label{fig:DPQHD-workflow}
\end{figure}

\subsubsection{Low Rank Decomposition of Projection Matrix}

As discussed in the previous sections, the projection matrix is a major contributor to memory usage, consuming a significant portion of the available memory resources. In order to improve the efficiency of the encoding process, we propose to use a two level low rank decomposition~\cite{kishore2017literature} of the projection matrix (see Figure \ref{fig:DPQHD-workflow}). This decomposition reduces both the memory and compute requirements of the encoding process. Specifically, the original projection matrix \(\mathbf{P} \in \mathbb{R}^{F \times D}\) is replaced by two smaller randomly initialized matrices: \(\mathbf{P_1} \in \mathbb{R}^{F \times r}\) and \(\mathbf{P_2} \in \mathbb{R}^{r \times D'}\), where \(r\) is much smaller than both \(F\) and \(D\), effectively using a projection matrix \(\mathbf{P'}\) as:

\begin{equation}
\mathbf{P'} \approx \mathbf{P_1} \cdot \mathbf{P_2}
\end{equation}

Now, in contrast to the original encoding process, where input data \(\mathbf{x} \in \mathbb{R}^F\) is projected into a high-dimensional space using \(\mathbf{P}\) to generate a hypervector \(\mathbf{h} \in \mathbb{R}^D\):

\begin{equation}
\mathbf{h} = \mathbf{P} \cdot \mathbf{x}, \quad \mathbf{h} \in \mathbb{R}^D
\end{equation}


With two-level decomposition, encoding is performed in two steps: the input \( \mathbf{x} \) is first transformed by \( \mathbf{P_1} \) to produce \( \mathbf{h_1} \in \mathbb{R}^r \), and then \( \mathbf{h} = \mathbf{P_2} \cdot \mathbf{h_1} = \mathbf{P_2} \cdot (\mathbf{P_1} \cdot \mathbf{x}) \in \mathbb{R}^{D'} \) gives the final high-dimensional representation. This is especially relevant for HDC, where high-dimensional projection matrices incur significant memory and compute costs. Low-rank approximation reduces storage and lowers the number of MACs in encoding, enabling more efficient HDC systems.



\subsubsection{Pruning}

    

To further optimize memory and computational efficiency, we adopt pruning by controlling the \(D'\) dimension in our decomposed encoding matrix. The optimal \(D'\) is selected through a calibration phase, which evaluates pruning impact on accuracy using a small validation set, e.g. 128 samples or less, to minimize accuracy loss. Once the optimal \(D'\) is identified, we adjust the dimensionality of the hypervectors and HDC model weights by removing the last \((D - D')\) dimensions, effectively reducing memory and computation demands across the HDC pipeline. This reduction in dimensionality enables more efficient memory usage and lowers computational costs, making the HDC pipeline suitable for deployment in resource-constrained environments.

\subsubsection{Quantization}
Quantization is a widely adopted technique in both small-scale~\cite{imani2017voicehd,ponzina2024microhd,hernandez2021onlinehd,hernandez2024optimizing} and large-scale~\cite{nagel2021white,pandey2023practical,lin2024awq, pandey2023softmax} machine learning systems, aimed at reducing memory usage and computational overhead. By representing high-precision matrices in lower-precision formats, quantization effectively decreases the storage and processing demands of these matrices. Any high-precision matrix, $W^{r}$, can be approximated through quantization as follows:

    

\begin{equation}
    W_{int} = \text{Clip} \left(  \Big \lfloor{\frac{W^{r}}{s}}\Big \rceil, \text{min}, \text{max} \right), \quad
    W^{r'} = s W_{int}
    \label{eq:quant_dequant}
\end{equation}

\begin{algorithm}[t]
\caption{MSE-Based Post-Training Quantization}
\label{PTQ-algo}
\begin{algorithmic}
\Require Tensor to quantize $T$, Bitwidth $b$
\Ensure Quantized tensor $T_{\text{best}}$, Optimal scale $s_{\text{best}}$

\State \textbf{Initialize:}
\State $t_{\text{max}} \leftarrow \max(|T|)$
\State $q_{\text{max}} \leftarrow 2^{(b-1)} - 1$
\State $s \leftarrow \frac{t_{\text{max}}}{q_{\text{max}}}$
\State $S_{\text{cand}} \leftarrow [0.1s, 0.2s, \dots, 0.9s, s]$
\State $s_{\text{best}} \leftarrow \text{None}$, $\epsilon_{\text{min}} \leftarrow \infty$, $T_{\text{best}} \leftarrow \text{None}$

\For{$s_{\text{cand}} \in S_{\text{cand}}$}
    \State $T_q \leftarrow \text{Quantize}(T, s_{\text{cand}}, - q_{\text{max}}, q_{\text{max}})$ \Comment{Equation~\ref{eq:quant_dequant}}
    \State $T_d \leftarrow \text{Dequantize}(T_q, s_{\text{cand}})$ \Comment{Equation~\ref{eq:quant_dequant}}
    \State $\epsilon \leftarrow \text{MSE}(T_d, T)$
    
    \If{$\epsilon < \epsilon_{\text{min}}$}
        \State $\epsilon_{\text{min}} \leftarrow \epsilon$
        \State $s_{\text{best}} \leftarrow s_{\text{cand}}$
        \State $T_{\text{best}} \leftarrow T_q$
    \EndIf
\EndFor

\State \Return $T_{\text{best}}, s_{\text{best}}$
\end{algorithmic}
\end{algorithm}

where \( W_{\text{int}} \) denotes the low-precision form of \( W^r \), with scale \( s \) and clipping thresholds \( \text{min} \) and \( \text{max} \). We use symmetric quantization, setting \( \text{min} = -2^{(b-1)} \) and \( \text{max} = 2^{(b-1)} - 1 \) based on the bitwidth \( b \).

While quantization is effective, naively reducing high-precision matrices to very low bitwidths can introduce noise. Finding the optimal scale factor is crucial, as it balances resolution and clipping. Our post-training quantization algorithm (Algorithm~\ref{PTQ-algo}) uses a Mean Squared Error (MSE) based approach to select the optimal scale, enabling accurate representation within the limited bitwidth.

\begin{figure*}[t]
    \centering
    \begin{subfigure}[b]{0.40\textwidth}
        \centering
        \includegraphics[width=0.68\textwidth]{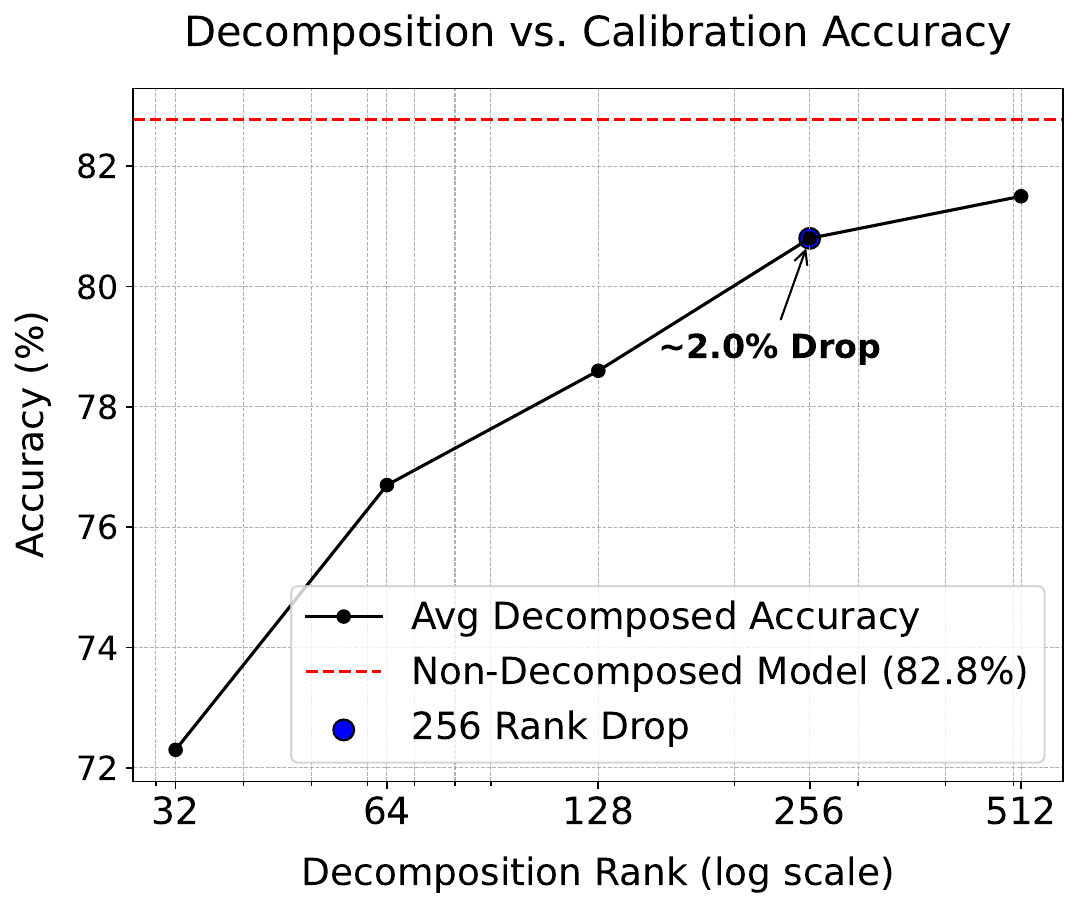}
        \caption{MNIST}
        \label{fig:decomposition_effects}
    \end{subfigure}
    \hfill
    \begin{subfigure}[b]{0.58\textwidth}
        \centering
        \begin{subfigure}[b]{0.48\textwidth} 
            \centering
            \includegraphics[width=\textwidth]{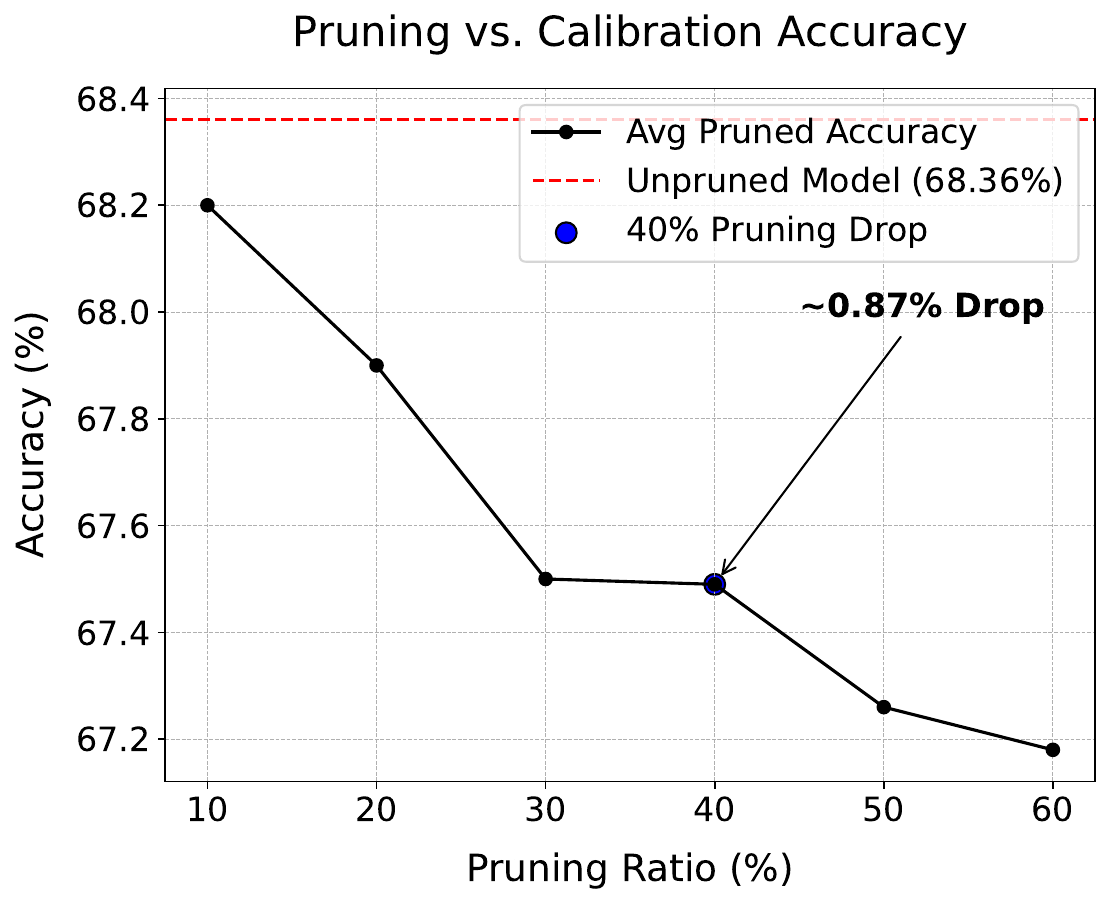}
            \caption*{(b-i) DD}
            \label{fig:dd_pruning}
        \end{subfigure}
        \hfill
        \begin{subfigure}[b]{0.48\textwidth} 
            \centering
            \includegraphics[width=\textwidth]{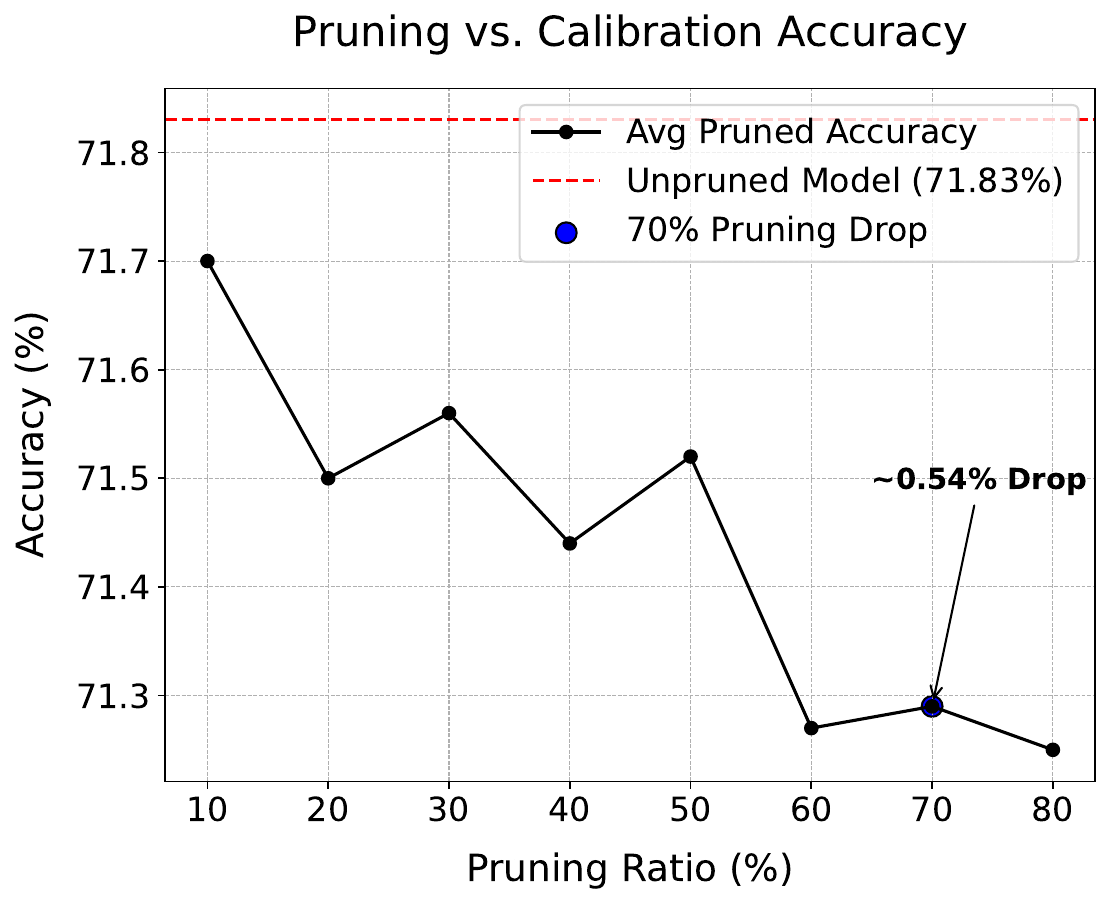}
            \caption*{(b-ii) Fashion-MNIST}
            \label{fig:fashion_mnist_pruning}
        \end{subfigure}
        \label{fig:pruning_effects}
    \end{subfigure}
    \caption{Effect of decomposition rank on calibration accuracy for (a) MNIST and pruning ratio for (b-i) DD and (b-ii) Fashion-MNIST. Accuracy is averaged over 5 subsets of 128 samples.}
    \label{fig:decomposition_pruning_comparison}
\end{figure*}

\subsection{Theoretical Insights on Pruning Before Quantization} \label{justification-sq}
Prior research has explored quantization~\cite{imani2019quanthd} and pruning~\cite{buelagala2023energy} as independent techniques for compressing HDC systems. However, only a limited number of works~\cite{ponzina2024microhd} have investigated their combined effects empirically. In this section, we provide a formal justification for why pruning should be applied before quantization. Specifically, we establish in Lemma~\ref{theorem-1} that applying pruning before quantization does not introduce additional error beyond the cumulative effect of each operation.

\begin{Lemma} \label{theorem-1}
Let \( q \) represent the channel-wise quantization operation and \( s \) denote the pruning transformation, which drops dimensions from the last, similar to the scheme adopted in our work. The error introduced by applying pruning before quantization is no greater than the sum of the individual errors of quantization and pruning. Formally, for any vector \( x \in \mathbb{R}^n \), we define:
\[
\epsilon_{q\circ s}(x) = x - q\bigl(s(x)\bigr),
\quad
\epsilon_q(x) = x - q(x),
\quad
\epsilon_s(x) = x - s(x).
\]
Then, applying \( s \) before \( q \) ensures that the total error remains bounded by the sum of individual errors:
\[
\|\epsilon_{q \circ s}(x)\|
\;\leq\;
\|\epsilon_q(x)\|
\;+\;
\|\epsilon_s(x)\|.
\]
\end{Lemma}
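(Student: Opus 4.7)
The plan is to exploit the additive identity
\[
x - q\bigl(s(x)\bigr) \;=\; \bigl(x - s(x)\bigr) + \bigl(s(x) - q(s(x))\bigr) \;=\; \epsilon_s(x) + \epsilon_q\bigl(s(x)\bigr),
\]
so that a single application of the triangle inequality reduces the lemma to showing $\|\epsilon_q(s(x))\| \le \|\epsilon_q(x)\|$. In other words, the combined error is literally the pruning error plus the quantization error \emph{of the already pruned vector}, and the only nontrivial task is to control the latter by the quantization error of the original vector.

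To prove that bound I would split the index set $\{1,\dots,n\}$ into the ``surviving'' block $S$ (the first $D'$ coordinates that $s$ keeps) and the ``dropped'' block $\bar S$ (the last $n-D'$ coordinates that $s$ zeros out), and analyze $\epsilon_q(s(x))$ coordinate-wise. On $S$, $s(x)_i = x_i$, and since $q$ is channel-wise (each coordinate uses a scale computed solely from that coordinate, as in Algorithm~\ref{PTQ-algo}), the scale on channel $i \in S$ is identical whether we feed in $x$ or $s(x)$; therefore $q(s(x))_i = q(x)_i$ and $\epsilon_q(s(x))_i = \epsilon_q(x)_i$. On $\bar S$, $s(x)_i = 0$, and the symmetric quantizer sends $0 \mapsto 0$, so $\epsilon_q(s(x))_i = 0$. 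Hence $\epsilon_q(s(x))$ is exactly $\epsilon_q(x)$ restricted to $S$ and padded with zeros, which immediately gives $\|\epsilon_q(s(x))\| \le \|\epsilon_q(x)\|$. Combined with the triangle inequality applied to the decomposition above, this yields the stated bound.

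A useful sanity check, which I would flag in passing, is that $\epsilon_s(x)$ is supported on $\bar S$ while $\epsilon_q(s(x))$ is supported on $S$, so the two are orthogonal and one in fact obtains the sharper Pythagorean estimate $\|\epsilon_{q\circ s}(x)\|^2 = \|\epsilon_s(x)\|^2 + \|\epsilon_q(s(x))\|^2 \le \|\epsilon_q(x)\|^2 + \|\epsilon_s(x)\|^2$. The main obstacle, and the only place where the ordering of operations really matters, is the channel-wise assumption on $q$: if $q$ used a single per-tensor scale, then pruning would in general shrink $t_{\max}$ and perturb the scales on surviving channels, breaking the identity $q(s(x))_i = q(x)_i$ on $S$ and destroying the clean coordinate-wise decomposition. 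Making that assumption explicit, and invoking $q(0)=0$ from the symmetric scheme of Algorithm~\ref{PTQ-algo}, is the step that requires care; everything else is a routine triangle inequality.
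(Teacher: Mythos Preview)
Your proposal is correct and follows essentially the same route as the paper: both arguments split $x$ into a surviving block and a dropped block, use the channel-wise nature of $q$ (together with $q(0)=0$) to conclude that $q(s(x))$ equals $[q(v_1),0]$, and then finish with a triangle inequality plus the monotonicity $\|[\epsilon_q(v_1),0]\|\le\|[\epsilon_q(v_1),\epsilon_q(v_2)]\|$. Your additive decomposition $\epsilon_{q\circ s}(x)=\epsilon_s(x)+\epsilon_q(s(x))$ and your Pythagorean remark make the structure a bit more explicit than the paper's inline chain, but the underlying argument is the same.
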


\begin{proof}   
Let the original vector \( x \) be represented as \( [v_1, v_2] \), where \( v_1 \) and \( v_2 \) are its sub-vectors. After pruning, the modified vector becomes \( [v_1, 0] \), where the trailing dimensions are removed, as previously discussed as our pruning strategy.
 Then
\begin{align}
\|\epsilon_{q \circ s}(x)\|
&= \|[v_1,\,v_2] - q([v_1,\,0])\| \notag \\
&= \|[\,v_1 - q(v_1),\,v_2\,]\| \notag \\
&\le \|[\,\epsilon_q(v_1),\,0\,]\| + \|[\,0,\,v_2\,]\| \notag \\
&= \|[\,\epsilon_q(v_1),\,0\,]\| + \|[\,\epsilon_s(v_1),\,\epsilon_s(v_2)\,]\| \notag \\
&\text{(Pruning: $\epsilon_s(v_1) = 0$, $\epsilon_s(v_2) = v_2$)} \notag \\ 
&\le \|[\,\epsilon_q(v_1),\,\epsilon_q(v_2)\,]\| + \|[\,\epsilon_s(v_1),\,\epsilon_s(v_2)\,]\| \notag \\
&= \|\epsilon_q(x)\| + \|\epsilon_s(x)\|
\end{align}
\end{proof}
This result provides a theoretical foundation for structuring compression pipelines in HDC systems, demonstrating that pruning before quantization does not introduce any excess error beyond their individual contributions. While presented here for vectors, this proof naturally extends to matrices, where quantization operates independently on each channel, and pruning removes dimensions from the end.

\subsection{DPQ-HD on Ultra-Low Power Edge AI}
In this work, we focus on deploying AI on edge microcontrollers (MCUs), which are key targets for resource-constrained environments because of their affordability and energy efficiency, as widely explored in previous works~\cite{sridhara2011ultra,chien2016low}. Nonetheless, their limited processing power and memory capacity present significant challenges for the efficient implementation of AI on the edge~\cite{amrouch2022brain}. Recent implementations of HDC on MCU-class devices have demonstrated notable advantages over neural networks. Specifically, HDC leverages simple bitwise operations to achieve low-latency inference and online learning on resource‐constrained platforms, thereby reducing both energy consumption and memory footprint compared to neural network-based approaches~\cite{redding2023embhd}. Moreover, studies have highlighted HDC's inherent robustness to noise and its capacity for lifelong adaptation, making it particularly effective for dynamic edge applications such as gesture, image, and speech recognition~\cite{imani2017voicehd,benatti2019online,peitzsch2024putting}. Unlike CNN, which often struggle to generalize on non-image-based data and rely on memory-intensive nonlinear activations and skip connections, HDC exhibits greater versatility across diverse modalities, rendering it well-suited for multi-task applications on ultra-low power devices.

Despite these advantages, deploying models on MCUs remains constrained by their inherent 8-bit computation limitation, which restricts the effectiveness of quantization techniques that depend on arbitrarily small bitwidths. As a result, prior methods such as QuantHD~\cite{imani2019quanthd}, which rely on fine-grained quantization, fail to fully leverage the benefits of reduced precision since computations on MCUs still default to 8-bit arithmetic. DPQ-HD optimizes HDC models through hardware-aware quantization, leveraging efficient bit-packing on microcontrollers. Additionally, DPQ-HD combines pruning and decomposition to reduce memory and MACs. By adapting to ultra-low-power device constraints, it enables efficient AI deployment on MCUs, paving the way for accessible edge AI solutions.

  

\subsection{Adaptive Online Inference Optimization}
In addition to the offline memory and compute optimizations introduced by DPQ-HD, we propose an online inference optimization strategy to further accelerate the inference process. This optimization is structured into two key phases: a \textit{calibration phase}, which determines the early-exit threshold, and an \textit{adaptive inference phase}, where classification is performed progressively based on the computed confidence margin and early exit mechanisms~\cite{rahmath2024early,chen2024bitwise}.

\subsubsection{Calibration Phase}
A small calibration set (also used for pruning and rank selection) is used to set the early-exit threshold \( \tau \). For each sample, we compute cosine margins between the top two classes and set \( \tau \) as the mean margin. At inference, if a sample’s margin exceeds \( \tau \), evaluation stops early, improving efficiency with minimal accuracy drop.

\subsubsection{Adaptive Inference}  
Inference is performed incrementally by processing hypervectors in fixed chunks of size \( L = \lceil D / C \rceil \), where \( D \) is the total number of dimensions and \( C \) the number of classes. At each step, cosine similarity is computed using partial hypervectors, and the least similar class is eliminated, reducing comparisons while preserving accuracy. To accelerate the process, two classes are removed per iteration until 50\% of the classes are eliminated, as low-probability candidates are unlikely to be strong contenders. Thereafter, a single class is removed per iteration to refine classification. Early exit is considered after 50\% of the classes are eliminated but is only triggered if the confidence margin between the predicted and second-best class exceeds \( \tau \), further reducing computations while maintaining performance. The complete process is summarized in Algorithm~\ref{alg:earlyexit}.

\begin{algorithm}[t]
\caption{Adaptive Inference Strategy}
\label{alg:earlyexit}
\begin{algorithmic}[1]
\Require Sample $s \in \mathbb{R}^D$, Class HVs $H \in \mathbb{R}^{C \times D}$, Threshold $\tau$
\Ensure Predicted class $c^*$
\State $A \gets \{0,\dots,C{-}1\},\; z \gets \mathbf{0} \in \mathbb{R}^C,\; d \gets 0$
\State $n_s \gets \|s\|,\; n_i \gets \|H_i\|\; \forall i,\; L \gets \lceil D/C \rceil$
\While{$|A| > 2$ and $d < D$}
  \State $\ell \gets \min(L, D{-}d)$
  \ForAll{$i \in A$} 
    \State $z[i] \mathrel{+}= \sum_{j=d}^{d+\ell-1} s_j \cdot H_{i,j}$
  \EndFor
  \State $c_i \gets z[i]/(n_s n_i),\; \forall i \in A$
  \If{$|A| > C/2$}
    \State Remove 2 lowest $c_i$ from $A$
  \Else
    \State Remove 1 lowest $c_i$ from $A$
    \If{$|A| \le C/2$ and $c_1 - c_2 \ge \tau$ for top-$2$ $c_i$} \textbf{break} \EndIf
  \EndIf
  \State $d \gets d + \ell$
\EndWhile
\State \Return $\arg\max_{i \in A} c_i$
\end{algorithmic}
\end{algorithm}

\section{Experimental Analysis}\label{sec:results}

\begin{figure*}[t]
   \centering
   \scalebox{0.80}{
   \begin{minipage}[t]{1\textwidth}
      \centering
      \includegraphics[width=\linewidth]{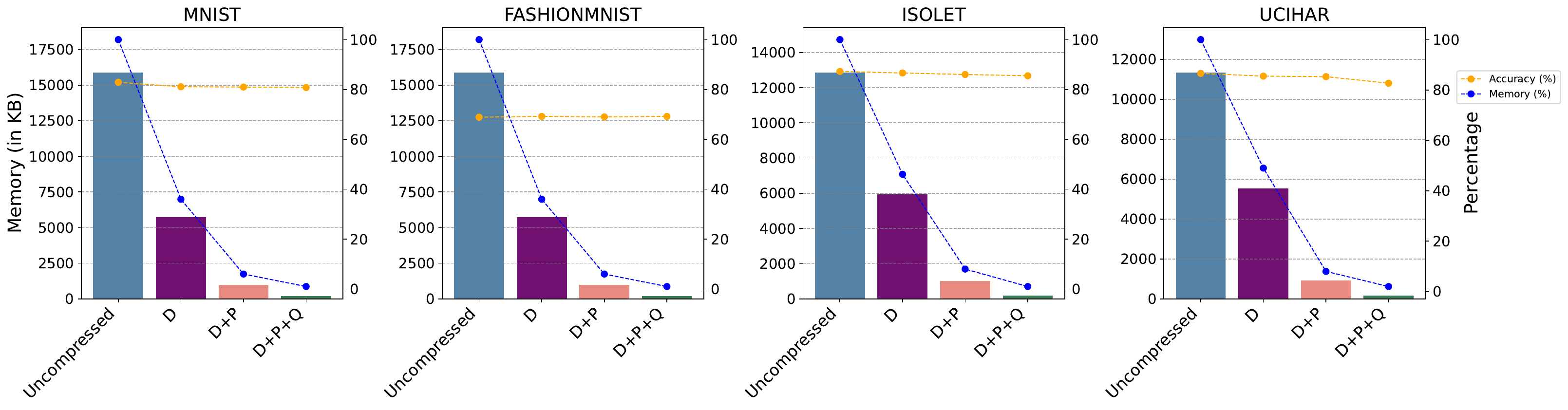}
      \subcaption{CentroidHD (Rank = 256, Prune = 70\%, Bitwidth = 3)}
      \label{fig:top_image}
   \end{minipage}
   }
   
   \vspace{0.1cm} 
   
   \scalebox{0.80}{
   \begin{minipage}[t]{0.55\textwidth}
      \centering
      \includegraphics[width=\linewidth]{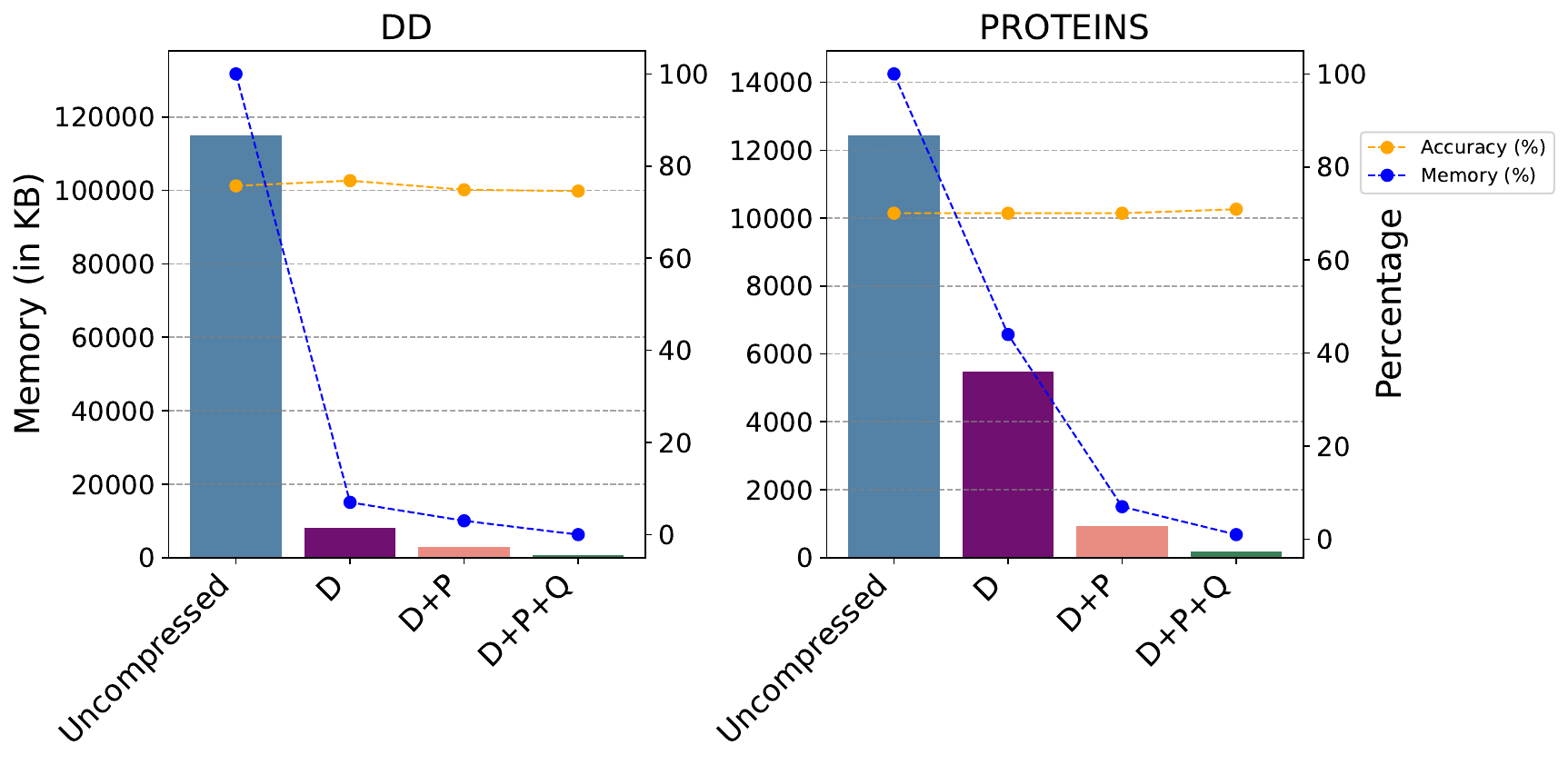}
      \subcaption{GraphHD (Rank = 256, Prune = 40\%, Bitwidth = 3)}
      \label{fig:left_image}
   \end{minipage}
   \hspace{0.5cm} 
   \begin{minipage}[t]{0.34\textwidth}
      \centering
      \includegraphics[width=\linewidth]{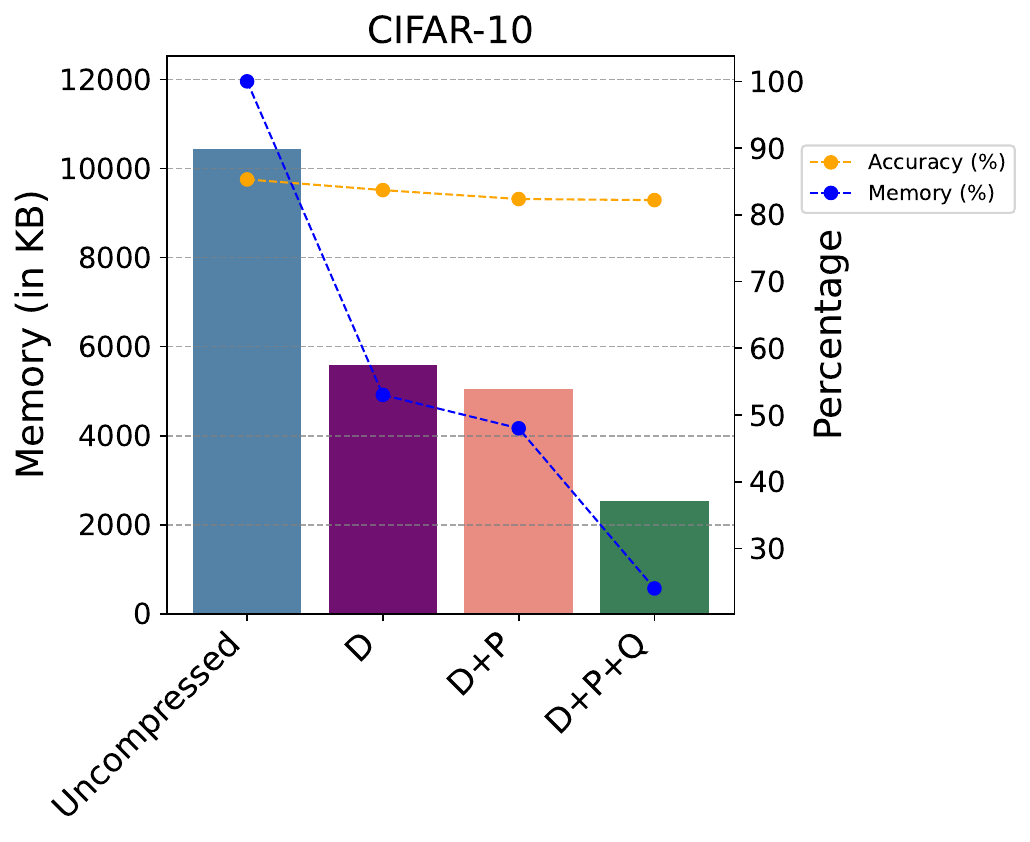}
      \subcaption{HDnn (Rank = 512, Prune = 10\%, Bitwidth = 4)}
      \label{fig:right_image}
   \end{minipage}
   }
   
   \caption{Comparison of uncompressed HDC workloads trained using (a) CentroidHD, (b) GraphHD, and (c) HDnn, and their compressed versions using DPQ-HD. Plots show the impact of decomposition, pruning, and quantization, with the left y-axis showing total memory and the right y-axis showing accuracy and memory reduction relative to the uncompressed model.}
   \label{fig:combined_figure}
\end{figure*}

\subsection{\textbf{Experimental Setup and Baselines}}
To demonstrate the effectivenss of DPQ-HD, we compare it against state-of-the-art (SOTA) methods across different categories. We categorize baselines into three groups: (1) \textit{Task-Specific SOTA}, which represents high-performing models specifically designed for various tasks, (2) \textit{SOTA Compression Baselines}, which include both non-retraining and retraining-based compression techniques, and (3) \textit{Early Exiting Baseline}, which focus on adaptive inference strategies. Each of these baselines provides valuable insights into different aspects of DPQ-HD. 
\subsubsection{\textbf{Datasets}}
We use MNIST~\cite{deng2012mnist}, FASHION-MNIST~\cite{xiao2017fashion}, and CIFAR10~\cite{krizhevsky2010cifar} for image classification, ISOLET~\cite{asuncion2007uci} for speech classification, as well as PROTEINS~\cite{dobson2003distinguishing} and DD~\cite{dobson2003distinguishing} for graph classification. These datasets are chosen as they serve as standard benchmarks for state-of-the-art methods, enabling a direct comparison with both non-retraining and re-training-based baselines.

\subsubsection{\textbf{Task-Specific SOTA}} To demonstrate the versatility of DPQ-HD, we evaluate its performance on diverse task-specific SOTA HDC workloads, using CentroidHD~\cite{kleyko2022survey} for traditional classification, GraphHD~\cite{nunes2022graphhd} for graph-based tasks, and HDNN~\cite{dutta2022hdnn} for large-scale image classification like CIFAR-10~\cite{krizhevsky2010cifar}.

\subsubsection{\textbf{Baselines for SOTA comparison}} We compare our proposed method to available non-retraining-based methods, including the baseline precision-reduction method and Energy-Efficient Sparse Hyperdimensional Computing for speech recognition (Eff-SparseHD)~\cite{buelagala2023energy} which applies redundancy pruning without retraining. We also compare to re-training based compression baselines such as QuantHD~\cite{imani2019quanthd}, MicroHD~\cite{ponzina2024microhd} and DeMAT~\cite{xu2023fsl}. Furthermore, to highlight the effectiveness of our adaptive inference, we compare it against the early-exit baseline BAET~\cite{chen2024bitwise}.

\subsubsection{\textbf{Hardware}} We run our performance and energy evaluation on the Arduino UNO board, featuring an ATmega328P MCU operating at 16 MHz and equipped with 32 KB of flash memory and 2 KB of SRAM. Due to limited memory, we use bit packing to reduce storage and account for bit unpacking overhead at runtime. Since SRAM cannot store all class HVs, cosine similarity is computed iteratively by loading subsets of HDC dimensions and accumulating dot products.

\subsection{Choosing optimum Decomposition Rank and Pruning Ratio}
To achieve efficient compression while maintaining accuracy, we obtain the decomposition rank and pruning ratio using a calibration phase on small validation subsets (e.g., 128 samples). As shown in Figure~\ref{fig:decomposition_pruning_comparison}, increasing the rank improves accuracy up to an optimal point, beyond which redundancy increases (Figure~\ref{fig:decomposition_pruning_comparison}a). Similarly, pruning affects different datasets uniquely: simpler tasks like Fashion-MNIST tolerate higher pruning, while complex tasks like DD degrade significantly with aggressive pruning (Figures~\ref{fig:decomposition_pruning_comparison}b-i, b-ii). By selecting the optimal rank and pruning ratio through this calibration process, we balance computational efficiency and model accuracy, making this approach adaptable to diverse datasets.

\begin{figure*}[h]
   \centering
   \hspace{0.3in}
   \scalebox{0.94}{
   \begin{minipage}[t]{0.32\textwidth}
      \centering
      \includegraphics[width=\linewidth]{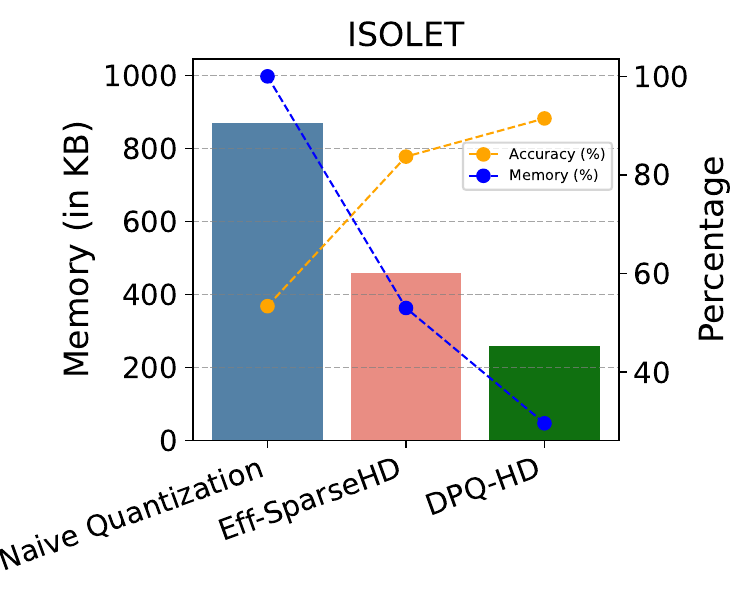}
      \subcaption{Comparison with post-training baselines}
      \label{fig:SOTA-comparisona}
   \end{minipage}
   }
    \scalebox{1.1}{
   \begin{minipage}[t]{0.55\textwidth}
      \centering
      \includegraphics[width=\linewidth]{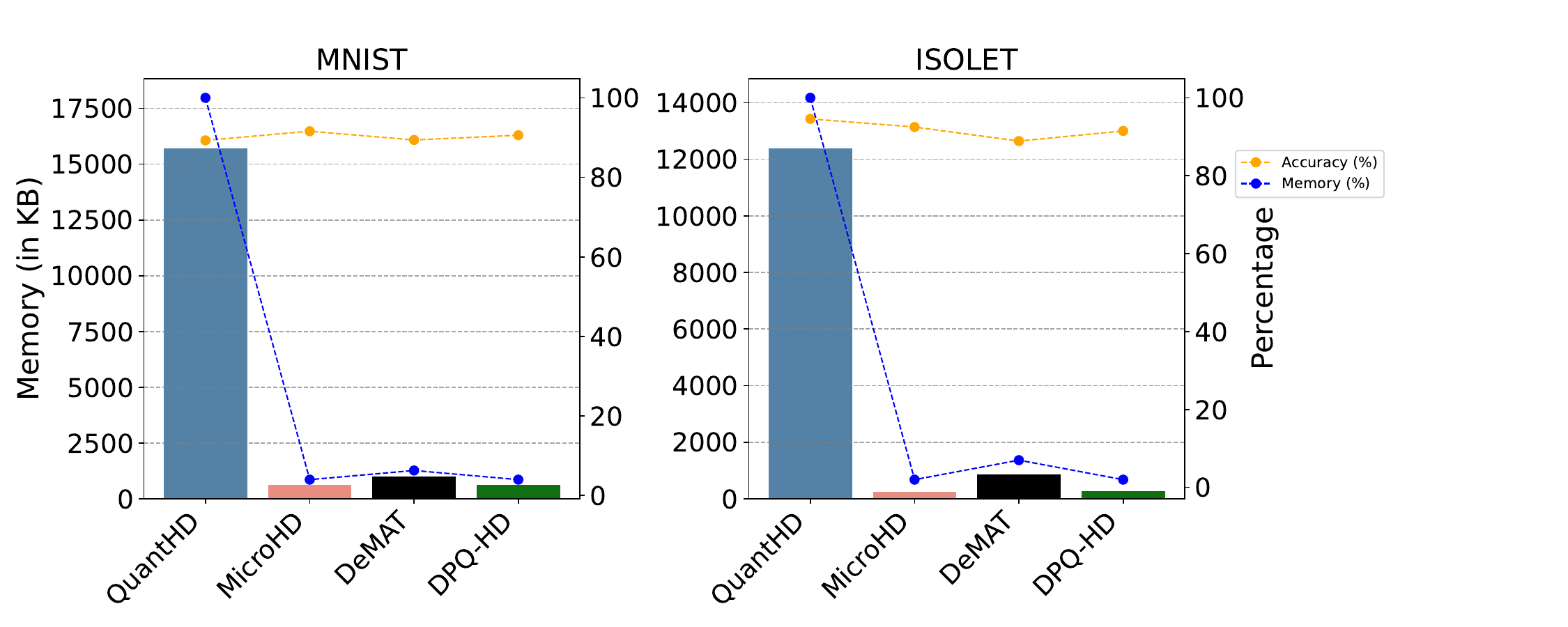}
      \subcaption{Comparison with re-training baselines}
      \label{fig:SOTA-comparisonb}
   \end{minipage}   
    }
   \caption{Comparison of accuracy and memory overhead for (a) post-training baselines: Naive Quantization, Eff-SparseHD~\cite{buelagala2023energy}, and (b) retraining baselines: QuantHD~\cite{imani2019quanthd}, MicroHD~\cite{ponzina2024microhd}, and DeMAT~\cite{xu2023fsl}, trained via OnlineHD~\cite{hernandez2021onlinehd} and compressed with DPQ-HD. The left y-axis represents total memory (encoder + HDC model), while the right y-axis shows accuracy and memory reduction relative to the uncompressed model.}
   \label{fig:SOTA-comparison}
\end{figure*}

\vspace{-0.2cm}
\subsection{Experimental Results} \label{sec-results}
\subsubsection{Generalizability of DPQ-HD Across Task-Specific SOTA}

Figure \ref{fig:combined_figure} highlights the cumulative contribution of each compression technique in DPQ-HD, progressively implementing decomposition, pruning, and quantization on different task specific SOTA HDC baselines. As shown in the figure, decomposition (D) is applied first, followed by pruning (D+P), and finally quantization (D+P+Q), showing cumulative memory reductions achieved at each stage with minimal impact on accuracy. Notably, DPQ-HD after applying all three techniques achieves substantial memory reductions (up to 20-100$\times$ memory reductions compared to uncompressed HDC workloads) with only a 1-2\% loss in accuracy across various datasets, showcasing its ability to drastically reduce memory usage with minimal impact on task performance.  
\vspace{-0.1cm}
\subsubsection{Comparison with non-retraining based compression SOTA}
In order to demonstrate the effectiveness of DPQ-HD, we report comparison of our method with existing baselines on common datasets. We compare DPQ-HD with other approaches that compress HD workloads in a post-training manner. As shown in Figure~\ref{fig:SOTA-comparisona}, DPQ-HD achieves 91.5\% classification accuracy while naive quantization and Eff-SparseHD~\cite{buelagala2023energy} obtain 53.37\% and 83.7\% respectively on the ISOLET~\cite{asuncion2007uci} dataset. By using multiple techniques, DPQ-HD avoids relying solely on pruning, which can cause significant accuracy loss when applied extensively without retraining. This approach enables DPQ-HD to preserve high accuracy even after compression. 

\subsubsection{Comparison with retraining based compression SOTA}
We now compare HDC workloads trained using onlineHD~\cite{hernandez2021onlinehd} and compressed by DPQ-HD and with state-of-the-art based compression techniques, like MicroHD~\cite{ponzina2024microhd}, QuantHD~\cite{imani2019quanthd}, and DeMAT~\cite{xu2023fsl}. Unlike DPQ-HD, these methods follow the 30 or more epochs of retraining for obtaining the final compressed HDC models. As shown in Figure \ref{fig:SOTA-comparisonb}, DPQ-HD achieves 90.61\% and 91.46\% on MNIST and ISOLET respectively and outperforms DeMAT~\cite{xu2023fsl} on both the datasets by obtaining more compression. Also, DPQ-HD achieves models with comparable accuracy and similar size to MicroHD~\cite{ponzina2024microhd}, with accuracy slightly below MicroHD's 91.57\% on MNIST and 92.51\% on ISOLET, all without the retraining overhead. QuantHD~\cite{imani2019quanthd}, on the other hand, performs lower than both DPQ-HD and MicroHD~\cite{ponzina2024microhd} on MNIST with 89.28\% accuracy, and higher on ISOLET with 94.6\% accuracy. However, in both cases, it requires up to 25-50x more memory for the HDC workload, resulting in significant on-device overhead, which is a critical limitation for deployment on microcontroller devices.
\begin{figure}[t]
   \centering
   \scalebox{0.9}{
   \includegraphics[width=0.80\linewidth]{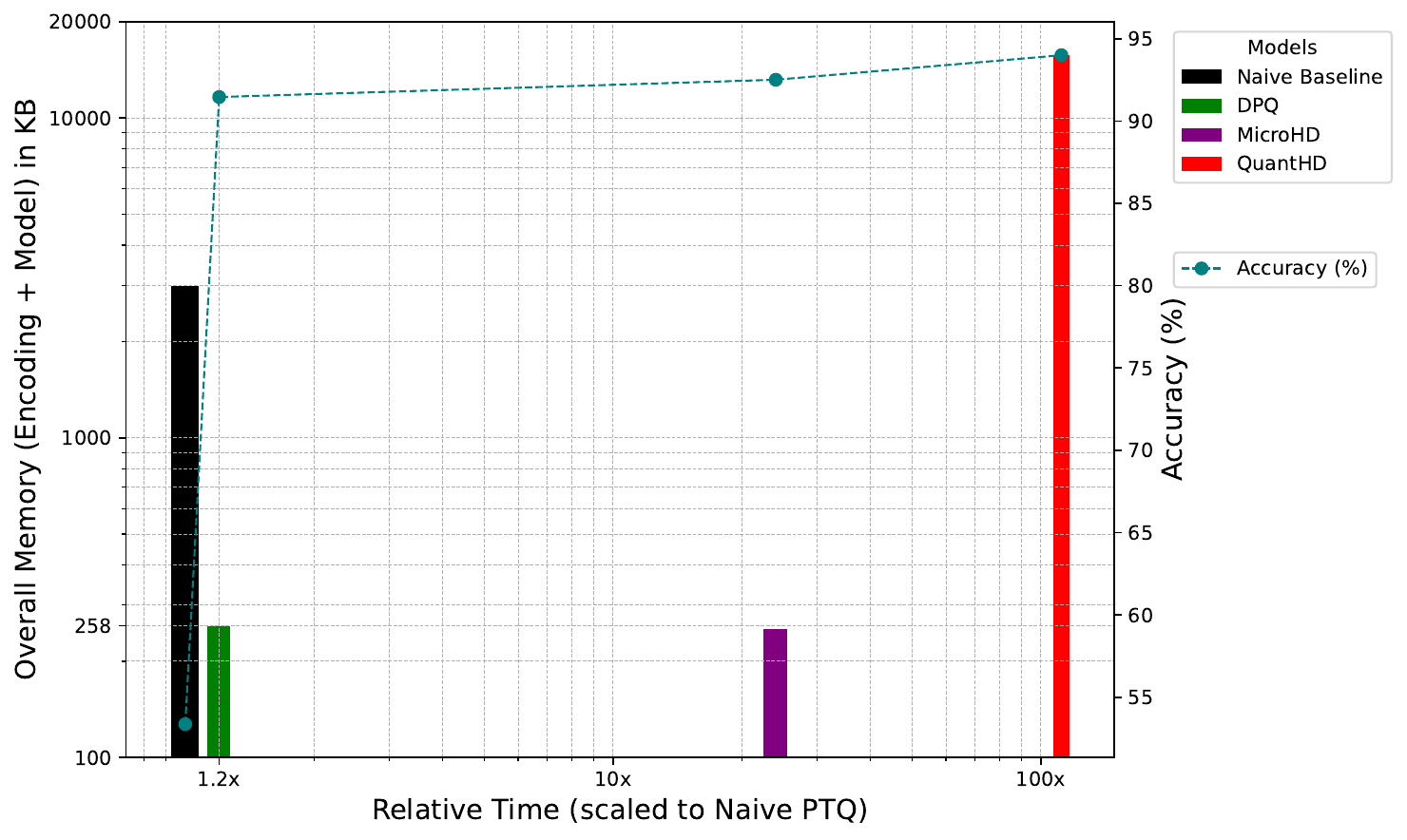} }
   \caption{Comparison of DPQ-HD with MicroHD~\cite{ponzina2024microhd} and QuantHD~\cite{imani2019quanthd} in memory usage, compressed model accuracy, and offline optimization time related to naive quantization.}

   \label{fig:compression_methods}
\end{figure}
\subsection{Comparison of Optimization Time}
As shown in Figure~\ref{fig:compression_methods}, the optimization times required by various methods to obtain compressed HDC workloads differ significantly. MicroHD~\cite{ponzina2024microhd} and QuantHD~\cite{imani2019quanthd}, in particular, require 24x to 100x more time than DPQ-HD due to the retraining phase, often over 30 epochs, to recover compressed model performance. The time comparison reflects only the retraining phase, excluding the additional overhead of initial training or parameter search, further highlighting the inefficiency of retraining-based methods. In contrast, DPQ-HD’s retraining-free approach ensures significantly faster compression without significant accuracy degradation, making it ideal for rapid deployment on memory-constrained edge devices.

\subsection{Power and Inference Latency Analysis}
Lastly, to demonstrate the effectiveness of our framework for on-device deployment on MCU, we herein compare the inference runtime performance of the proposed DPQ-HD, QuantHD~\cite{imani2019quanthd}, MicroHD~\cite{ponzina2024microhd} and DeMAT~\cite{xu2023fsl} using baseline 10k-dimensional HDC implementations as a baseline.
MCUs are often constrained to 8-bit computations, making decomposition and pruning extremely crucial for achieving optimal end-to-end runtime.

As detailed in Table \ref{table:runtime_comparison}, DeMAT~\cite{xu2023fsl}, MicroHD~\cite{ponzina2024microhd} and DPQ-HD demonstrate significant inference performance improvements, resulting in 16.12$\times$, 16.27$\times$ and 56$\times$ respectively, respectively, compared to QuantHD's~\cite{imani2019quanthd} overall speedup of 1.3$\times$ over the baseline. The inefficiency of QuantHD~\cite{imani2019quanthd} on a microcontroller stems from only relying on quantization without leveraging pruning or decomposition techniques. A notable observation is that while MicroHD~\cite{ponzina2024microhd} achieves a similar model size and slightly better accuracy than DPQ-HD, it remains significantly slower on the target hardware. This slowdown is due to MicroHD using 10-bit representations, which are derived from its search-based configuration. In contrast, DPQ-HD efficiently integrates pruning and decomposition while aligning with hardware constraints, leading to a significantly higher speedup (up to 56×).

\begin{table}[t]
    \addtolength{\tabcolsep}{-2pt}
    \centering
    \fontsize{6.8pt}{10.50pt}\selectfont
    \begin{tabular} {l|c|c|c|c|c}
    \toprule
     & Baseline & QuantHD~\cite{imani2019quanthd}  & DeMAT~\cite{xu2023fsl} & MicroHD~\cite{ponzina2024microhd} & DPQ-HD (ours) \\
     \midrule
     Runtime (s) & 17.90 & 13.80 & 1.11 & 1.10 & 0.32 \\
     Energy (mJ) & 282  & 218 & 17 & 16.84 & 5.05 \\
     Improvement & 1$\times$ & 1.3$\times$ & 16.12$\times$ & 16.27 $\times$& 56$\times$ \\

    \midrule

    \end{tabular}
        \caption{Inference performance and energy evaluation on the ATmega328P MCU after different optimization approaches.}

    \label{table:runtime_comparison}
\end{table}

\begin{table}[h]
    \centering
    \fontsize{7.5pt}{10.50pt}\selectfont
    \begin{tabular}{l|cc}
        \hline
        \textbf{Method} & \textbf{MNIST} & \textbf{ISOLET} \\
        \hline
        BAET~\cite{chen2024bitwise} & 69.5\% & 70.1\% \\
        Ours & \textbf{76.02\%} & \textbf{76.94\%} \\
        \hline
    \end{tabular}
    \caption{Comparison of prediction runtime reduction while maintaining original accuracy.}
    \label{table:ops_reduction}
\end{table}
\vspace{-0.8cm}
We compare our adaptive online inference strategy with the state-of-the-art early-exit baseline BAET~\cite{chen2024bitwise}. As shown in Table~\ref{table:ops_reduction}, our method reduces MNIST and ISOLET runtime by 76.02\% and 76.94\%, outperforming BAET’s 69.5\% and 70.1\%, while maintaining accuracy. This demonstrates the efficiency of our approach without compromising performance.


\section{Conclusion}\label{sec:conclusions}
In this work, we introduced DPQ-HD, a novel post-training compression algorithm designed to compress end-to-end HDC workloads while maintaining close to the uncompressed performance without retraining. Our extensive experiments across various datasets show that DPQ-HD achieves memory reductions of up to 20$\times$ for image classification and 100$\times$ for graph classification tasks, with only a minimal 1-2\% drop in accuracy compared to uncompressed HD workloads, highlighting DPQ-HD's effectiveness for edge suitability. Furthermore, we show that DPQ-HD outperforms existing post-training pruning baselines in classification accuracy and achieves performance comparable to retraining-based state-of-the-art methods, all while requiring significantly less optimization time (up to 100$\times$) and offering significantly faster inference and lower power consumption (up to 56$\times$) on a microcontroller. Additionally, our adaptive inference strategy dynamically adjusts computation, progressively refining predictions and eliminating unlikely classes. Our approach reduces bitwise operations during prediction by up to 76.94\% while preserving accuracy. DPQ-HD, along with the added benefits of adaptive inference, provides an efficient framework for edge AI, enabling fast, low-power HDC deployment.

\section{Acknowledgments}
This work has been funded in part by NSF, with award numbers \#1826967, \#1911095, \#2003279, \#2052809, \#2100237, \#2112167, \#2112665, and in part by PRISM and CoCoSys, centers in JUMP 2.0, an SRC program sponsored by DARPA.
\bibliographystyle{ACM-Reference-Format}
\bibliography{main}

\end{document}